
\documentclass[runningheads]{llncs}
\usepackage[T1]{fontenc}
\usepackage{graphicx,verbatim,tabularx}
\usepackage{amsmath,algorithm,algorithmic,multicol,amssymb,hyperref,multirow,array}
\usepackage{marvosym}

\usepackage{color}

\urlstyle{rm}
\begin{document}
\title{Paired Image Generation with Diffusion-Guided Diffusion Models}

\author{
Haoxuan Zhang\inst{1,2\star}\and
Wenju Cui\inst{1,2\star}\and
Yuzhu Cao\inst{1,2}\and
Tao Tan\inst{3} \and
Jie Liu\inst{4} \and
Yunsong Peng\inst{1,2,5}\textsuperscript{\Letter} \and
Jian Zheng\inst{1,2}\textsuperscript{\Letter}
}
\authorrunning{H. Zhang et al.}
\institute{
School of Biomedical Engineering (Suzhou), Division of Life Sciences and Medicine, University of Science and Technology of China, Hefei Anhui, China\\ \and
Suzhou Institute of Biomedical Engineering and Technology, Chinese Academy of Science, Suzhou Jiangsu, China\\ \and
Faculty of Applied Sciences, Macao Polytechnic University, Macao, China \and
Department of Radiology, Suzhou Municipal Hospital, Suzhou Jiangsu, China \and
Key Laboratory of Advanced Medical Imaging and Intelligent Computing of Guizhou Province, Guizhou Provincial People's Hospital, Guiyang Guizhou, China
}
\maketitle
{\let\thefootnote\relax\footnotetext{$\star$: Equal contribution; \Letter: pys@mail.ustc.edu.cn, zhengj@sibet.ac.cn}} 
\begin{abstract}

The segmentation of mass lesions in digital breast tomosynthesis (DBT) images is very significant for the early screening of breast cancer. However, the high-density breast tissue often leads to high concealment of the mass lesions, which makes manual annotation difficult and time-consuming. As a result, there is a lack of annotated data for model training. Diffusion models are commonly used for data augmentation, but the existing methods face two challenges. First, due to the high concealment of lesions, it is difficult for the model to learn the features of the lesion area. This leads to the low generation quality of the lesion areas, thus limiting the quality of the generated images. Second, existing methods can only generate images and cannot generate corresponding annotations, which restricts the usability of the generated images in supervised training. In this work, we propose a paired image generation method. The method does not require external conditions and can achieve the generation of paired images by training an extra diffusion guider for the conditional diffusion model. 
During the experimental phase, we generated paired DBT slices and mass lesion masks. Then, we incorporated them into the supervised training process of the mass lesion segmentation task. The experimental results show that our method can improve the generation quality without external conditions. Moreover, it contributes to alleviating the shortage of annotated data, thus enhancing the performance of downstream tasks. The source code is available at \url{https://github.com/zhanghx1320/PIG}.

\keywords{Paired image generation\and Diffusion-guided diffusion models  \and DBT mass segmentation}


\end{abstract}
\section{Introduction}
The segmentation of masses in Digital Breast Tomosynthesis (DBT) images is of great significance for the early screening of breast cancer~\cite{concealment}. However, the dense fibroglandular tissue structure in mammograms often results in highly concealed mass lesions, as illustrated in Fig.~\ref{fig0}~\cite{concealment1}. This inherent concealment poses significant challenges for radiologists in manual annotation, leading to labor-intensive workflows and a severe scarcity of high-quality annotated datasets for training deep learning models.
To mitigate this data limitation, diffusion models have emerged as a promising solution for synthetic data augmentation in medical imaging~\cite{kazerouni2023102846,minim,maisi}. The generative framework of diffusion models can be broadly categorized into two paradigms. 
Unconditional generation methods, such as DDPM~\cite{ddpm} and DDIM~\cite{ddim}, synthesize realistic images from pure noise by progressively denoising random Gaussian distributions through Markov chains.
Conditional generation methods, on the other hand, achieve controllable generation by adding guidance signals to the denoising process of the network. These guidance signals can be in the form of classes~\cite{classifier_guidance,classifier_free}, texts~\cite{ldm,glide}, images~\cite{controlnet,sdg,sgd}, or features~\cite{Self_Attention_Guidance}. 

\begin{figure}
\centering
\includegraphics[width=0.8\textwidth]{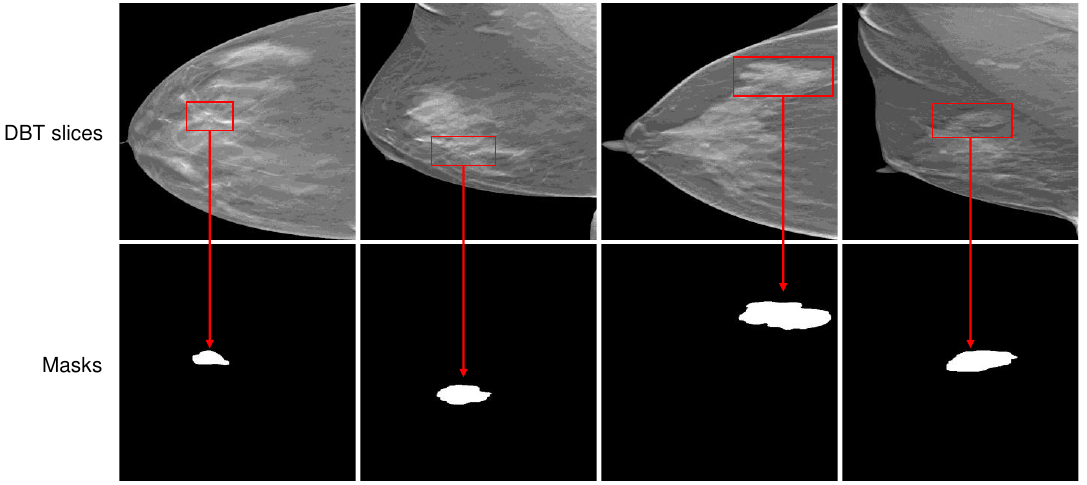}
\caption{Hidden mass lesions in DBT slices.} \label{fig0}
\end{figure}

However, the application of the above methods in the medical field has two problems. 
First, for unconditional generation methods, due to the concealment of the lesions, it is difficult for the model to learn the features of the lesion areas. This results in a relatively low generation quality of the lesion areas, thereby restricting the overall quality of the generated images~\cite{sgd}. 
Second, existing methods only generate images and cannot generate the corresponding annotations. Therefore, conditional generation methods are often used to obtain annotated images, where the annotations are input to guide the generation process. However, the conditions rely on manual input, which limits the diversity of annotations. As a result, the diversity of the generated images is restricted, and the generated images cannot efficiently improve the performance of the model through supervised training in downstream tasks.

To tackle these two problems, we propose a Paired Image Generation (PIG) method based on diffusion models. This method does not require external conditional inputs. Instead, it leverages the paired relationship between images as the guiding signal within the generation process. Specifically, we model the generation process of paired images first. Through mathematical derivations, we prove that the unconditional generation process of paired images can be equivalent to two diffusion processes guided mutually. Thus, similar to the classifier-guided diffusion model approach~\cite{classifier_guidance}, we can achieve the generation of paired images by training an additional diffusion guider for the conditional diffusion model. In the experimental stage, we generated paired DBT slices and mass lesion masks, and added the images generated by different methods to the training of the mass segmentation model. The experimental results show that compared with other generation methods, the images generated by our method have a 15.66 improvement in the FID metric and a 2.24\% improvement in the Dice metric. This indicates that our method contributes to enhancing the generation quality and alleviating the shortage of annotated data.

\section{Methodology}
\subsection{Revisiting Diffusion Models}
We adopt the diffusion models as the backbone of our proposed PIG method. The diffusion process of the diffusion models consists of the forward process and the reverse process~\cite{ddpm,ddim,sgd}. 
\subsubsection{The Forward Process.}
In the forward process, Gaussian noise is gradually added to the image until the image becomes pure noise.  The forward process of the diffusion models is Markovian, and the probability distribution $q$ of this process satisfies Eq. (\ref{e1}), where $x_0$ is the clean image, $x_t$ is the noised image at the $t$-th time-step, $T$ is the number of time-steps, $\mathcal{N}$ denotes the Gaussian distribution, $\beta_t\in(0,1)$ is the predefined noise schedule and $I$ is an identity matrix of the same shape as $x_0$. Eq. (\ref{e3}) represents the single-step noise addition process, and the $t$-step noise addition process can be obtained through recursion as shown in Eq. (\ref{e4}), where $\bar\alpha_t=\prod_{i=1}^t(1-\beta_i)\rightarrow 0$. 
\begin{equation}\label{e1}
    q(x_{1:T}|x_0)=\prod_{t=1}^T q(x_t|x_{t-1}), q(x_t|x_{t-1})=\mathcal{N}(x_t;\sqrt{1-\beta_t}x_{t-1},\beta_tI)
\end{equation}
\begin{equation}\label{e3}
    x_t=\sqrt{1-\beta_t}x_{t-1}+\sqrt{\beta_t}\epsilon_t,\epsilon_t\sim\mathcal{N}(0,I)
\end{equation}
\begin{equation}\label{e4}
    x_t=\sqrt{\bar\alpha_t}x_0+\sqrt{1-\bar\alpha_t}\epsilon,\epsilon\sim\mathcal{N}(0,I)
\end{equation}

\subsubsection{The Reverse Process.}
The reverse process generates images from pure noise by gradually denoising through a trained model $\theta$. Due to the concealment of lesions, in order to improve the generation quality of the lesion part, the mask of the lesion is often input as an external condition $c$, thereby guiding the model to pay more attention to the generation of the lesion area. The probability distribution $p_\theta$ of this process satisfies Eq. (\ref{e5}) and Eq. (\ref{e6}), where $p(x_T)=\mathcal N(x_T;0,I)$, $\mu_\theta(x_t,c,t)=\sqrt{\bar\alpha_{t-1}}x_0+\sqrt{1-\bar\alpha_{t-1}-\sigma_t^2}\cdot\epsilon_\theta(x_t,c,t)$ and $\sigma_t$ is the predefined noise variance. Specifically, the model predicts the noise $\epsilon_\theta(x_t,c,t)$ and obtains the predicted clean image $\hat x_0$ at the $t$-th time-step using Eq. (\ref{e7}). Subsequently, $x_{t-1}$ is obtained through Eq. (\ref{e8}) for the next denoising step.
\begin{equation}\label{e5}
    p_\theta(x_{0:T}|c)=p(x_T)\prod_{t=1}^Tp_\theta(x_{t-1}|x_t,c)
\end{equation}
\begin{equation}\label{e6}
    p_\theta(x_{t-1}|x_t,c)=\mathcal{N}(x_{t-1};\mu_\theta(x_t,c,t),\sigma_t^2I)
\end{equation}
\begin{equation}\label{e7}
    \hat x_0=\frac{x_t-\sqrt{1-\bar\alpha_t}\epsilon_\theta(x_t,c,t)}{\sqrt{\bar\alpha_t}}
\end{equation}
\begin{equation}\label{e8}
    x_{t-1}=\sqrt{\bar\alpha_{t-1}}\hat x_0+\sqrt{1-\bar\alpha_{t-1}-\sigma_t^2}\cdot\epsilon_\theta(x_t,c,t)+\sigma_t\epsilon_t,\epsilon_t\sim\mathcal{N}(0,I)
\end{equation}

\subsubsection{Limitations.} Since the condition $c$ relies on external input, manually creating these image conditions is time-consuming, and their diversity is limited. Therefore, we aim to generate $c$ simultaneously when generating images to obtain more diverse lesion annotations. In addition, another advantage of this approach is that it can introduce guiding signals into the image generation process without external condition input, thereby enhancing the generation quality. To achieve this goal, we propose a Paired Image Generation (PIG) method.
\begin{figure}
\centering
\includegraphics[width=0.8\textwidth]{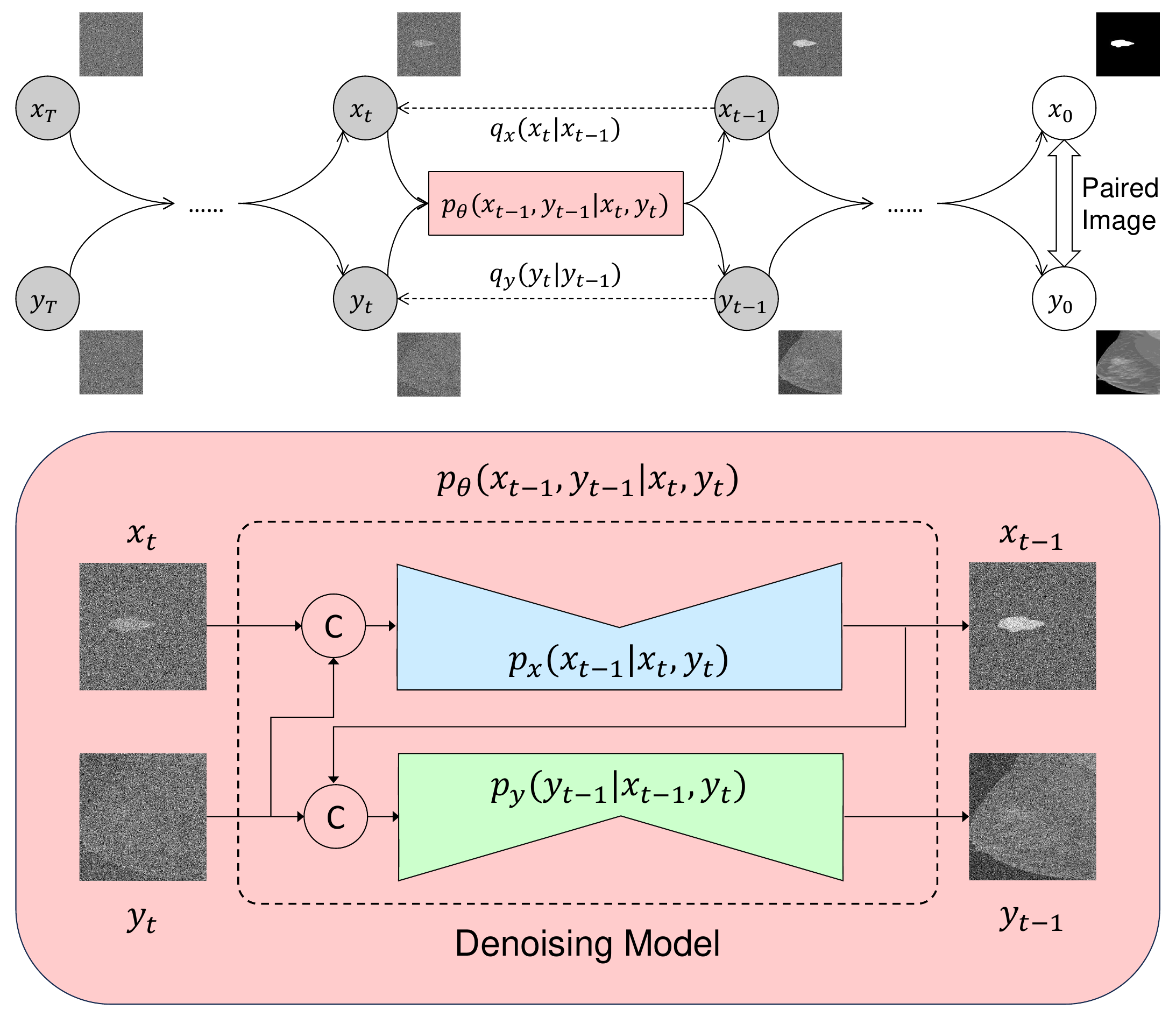}
\caption{Graphical model for paired image generation process.} \label{fig1}
\end{figure}

\subsection{Paired Image Generation}
In this section, we conduct the mathematical modeling for the PIG method based on the diffusion models. The PIG method combines two diffusion processes, and its graphical model is illustrated in Fig.~\ref{fig1}.

\subsubsection{The Forward Process.}
During the forward process, we adopt two Markovian noise addition processes similar to Eq. (\ref{e1})-(\ref{e4}) as shown in Eq. (\ref{e9})-(\ref{e11}), where $(x_0,y_0)$ represents the clean paired images and $\epsilon_1,\epsilon_2\sim\mathcal{N}(0,I)$.
\begin{equation}\label{e9}
    q(x_{1:T},y_{1:T}|x_0,y_0)=\prod_{t=1}^Tq_x(x_t|x_{t-1})q_y(y_t|y_{t-1})
\end{equation}
\begin{equation}\label{e10}
    q_x(x_t|x_{t-1})=\mathcal{N}(x_t;\sqrt{1-\beta_t}x_{t-1},\beta_tI),x_t=\sqrt{\bar\alpha_t}x_0+\sqrt{1-\bar\alpha_t}\epsilon_1
\end{equation}
\begin{equation}\label{e11}
    q_y(y_t|y_{t-1})=\mathcal{N}(y_t;\sqrt{1-\beta_t}y_{t-1},\beta_tI),y_t=\sqrt{\bar\alpha_t}y_0+\sqrt{1-\bar\alpha_t}\epsilon_2
\end{equation}

\subsubsection{The Reverse Process.}
The probability distribution $p_\theta$ of the joint denoising process satisfies Eq. (\ref{e12}), which can be implemented with two diffusion models $x$ and $y$ to fit $p_x$ and $p_y$ respectively as proved in Eq. (\ref{e13}). In Eq. (\ref{e12})-(\ref{e13}), $P(\cdot)$ is the abbreviation of probability distribution, $P(x_T)=\mathcal N(x_T;0,I)$ and $P(y_T)=\mathcal N(y_T;0,I)$. 
Specifically, we can train an additional diffusion guider as model $x$. It takes $y_t$ as the guiding signal to generate $x_{t-1}$ for guiding model $y$. Moreover, according to Eq. (\ref{e7})-(\ref{e8}), in the generation process of $x_{t-1}$, $\hat x_0$ will be predicted first. Therefore, the conditional diffusion model $p_y(y_{t-1}|x_0,y_t)$ (such as Maisi~\cite{maisi}, a generative foundation model pretrained on large-scale unannotated datasets) can be used as model $y$ to avoid the influence of the known noise $\epsilon_x$. During the generation process, this diffusion-guided diffusion model architecture can generate guiding signals for each other, thereby improving the image quality. The training and sampling processes are shown in Alg \ref{alg1}-\ref{alg2}, where we set $\sigma_t=0$.
\begin{equation}\label{e12}
    p_\theta(x_{0:T},y_{0:T})=P(x_T)P(y_T)\prod_{t=1}^Tp_\theta(x_{t-1},y_{t-1}|x_t,y_t)
\end{equation}

\begin{proposition}
The joint denoising process $p_\theta$ of the paired images can be implemented with two conditional denoising processes $p_x$ and $p_y$.
\end{proposition}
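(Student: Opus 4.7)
My approach is to factor each joint transition kernel $p_\theta(x_{t-1}, y_{t-1}\mid x_t, y_t)$ in Eq.~(\ref{e12}) into a product of two conditional transitions, one per chain, and then telescope the result back into the joint distribution $p_\theta(x_{0:T}, y_{0:T})$. The symmetry of the forward process in Eq.~(\ref{e9})--(\ref{e11}) and the graphical model in Fig.~\ref{fig1} suggest that the natural candidates for these two factors are exactly the conditional kernels $p_x(x_{t-1}\mid x_t, y_t)$ (guided by $y_t$) and $p_y(y_{t-1}\mid x_0, y_t)$ (guided by the current estimate of $x_0$), matching the ``diffusion guider'' for $x$ and the ``conditional denoiser'' for $y$ described in the text.

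First I would apply the chain rule to write
\begin{equation*}
p_\theta(x_{t-1}, y_{t-1}\mid x_t, y_t) \;=\; p_x(x_{t-1}\mid x_t, y_t)\,\cdot\,p_y(y_{t-1}\mid x_{t-1}, x_t, y_t),
\end{equation*}
which is a tautology and identifies the first factor immediately with the guider model $x$. Next I would reduce the second factor using the DDIM parameterization: by Eq.~(\ref{e7})--(\ref{e8}), the triple $(x_t, \hat x_0, x_{t-1})$ is deterministically linked through $\epsilon_\theta(x_t,y_t,t)$, so conditioning on $(x_t, x_{t-1})$ is equivalent to conditioning on $(x_t, \hat x_0)$. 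Invoking the conditional independence encoded in Fig.~\ref{fig1}, which posits that the $y$-chain depends on the $x$-chain only through the clean endpoint, the second factor collapses to $p_y(y_{t-1}\mid x_0, y_t)$. Substituting back into Eq.~(\ref{e12}) would then give
\begin{equation*}
p_\theta(x_{0:T}, y_{0:T}) \;=\; P(x_T)\,P(y_T)\,\prod_{t=1}^{T} p_x(x_{t-1}\mid x_t, y_t)\,p_y(y_{t-1}\mid x_0, y_t),
\end{equation*}
which is precisely the two-model implementation realised by Alg.~\ref{alg1}--\ref{alg2}.

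The main obstacle I anticipate is the second step, namely the passage from $p_y(y_{t-1}\mid x_{t-1}, x_t, y_t)$ to $p_y(y_{t-1}\mid x_0, y_t)$. This is not a purely formal manipulation: it leans on both the DDIM determinism of $\hat x_0$ and on the specific Markov structure of the graphical model, and care is needed because $x_0$ during sampling is really the running prediction $\hat x_0$, not a genuine observation. I would therefore make the conditional independencies I am using explicit, and verify that using $\hat x_0$ in place of $x_0$ is self-consistent with the fact that model $y$ is trained on clean paired data where $x_0$ is observed, so that at inference the prediction $\hat x_0$ from model $x$ plays precisely the role of the training condition.
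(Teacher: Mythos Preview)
Your first step---the chain-rule split into $p_x(x_{t-1}\mid x_t,y_t)\cdot P(y_{t-1}\mid x_{t-1},x_t,y_t)$---matches the paper exactly. The divergence is in how the second factor is simplified.

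The paper does \emph{not} pass to $x_0$-conditioning inside the proof. Instead it applies Bayes' rule to $P(y_{t-1}\mid x_{t-1},x_t,y_t)$ to isolate $P(x_t\mid x_{t-1},y_{t-1},y_t)/P(x_t\mid x_{t-1},y_t)$, observes that both equal the forward kernel $q_x(x_t\mid x_{t-1})$ (since the $x$-chain's forward step is Markov and independent of $y$), and cancels. The end product is $p_y(y_{t-1}\mid x_{t-1},y_t)$, conditioned on $x_{t-1}$, not on $x_0$. The replacement of $x_{t-1}$ by $\hat x_0$ is then handled \emph{outside} the formal derivation, as a modeling convenience justified by the DDIM update in Eq.~(\ref{e7})--(\ref{e8}).

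Your route folds that modeling step into the proof itself, invoking DDIM determinism and a graphical-model CI to jump straight to $p_y(y_{t-1}\mid x_0,y_t)$. This buys you a statement that aligns directly with what Alg.~\ref{alg2} actually computes, but at the cost of mixing distribution-level identities with a parameterization choice (the deterministic $\hat x_0$), which is precisely the obstacle you flag. The paper's Bayes-plus-cancellation argument is cleaner in that it never leaves the probability calculus and uses only the Markov/independence structure of the forward process in Eq.~(\ref{e9})--(\ref{e11}); your version is more operational but needs the extra care you already anticipate.
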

\begin{proof}According to the conditional probability formula and the Bayesian probability formula, the derivation process is as follows.
    \begin{align}\label{e13}
        &p_\theta(x_{t-1},y_{t-1}|x_t,y_t)\nonumber\\
        =&p_x(x_{t-1}|x_t,y_t)\cdot P(y_{t-1}|x_{t-1},x_t,y_t)\nonumber\\
        =&p_x(x_{t-1}|x_t,y_t)\cdot \frac{p_y(y_{t-1}|x_{t-1},y_t)\cdot P(x_t|x_{t-1},y_{t-1},y_t)}{P(x_t|x_{t-1},y_t)}\nonumber\\
        =&p_x(x_{t-1}|x_t,y_t)\cdot \frac{p_y(y_{t-1}|x_{t-1},y_t)\cdot q_x(x_t|x_{t-1})}{q_x(x_t|x_{t-1})}\nonumber\\
        =&p_x(x_{t-1}|x_t,y_t)\cdot p_y(y_{t-1}|x_{t-1},y_t)
    \end{align}
\end{proof}

\begin{algorithm}[h]
\caption{The training process of the diffusion guider.}
\label{alg1}
\renewcommand{\algorithmicrequire}{\textbf{Input:}}
\renewcommand{\algorithmicensure}{\textbf{Output:}}
\begin{algorithmic}[1]
\REQUIRE dataset distribution $p(x_0,y_0)$, noise schedule $\beta_t$, number of timesteps $T$
\ENSURE trained $model_x$
\REPEAT
    \STATE sample $(x_0, y_0)\sim p(x_0,y_0), \epsilon_1\sim \mathcal{N}(0,I),\epsilon_2 \sim \mathcal{N}(0,I), t\sim Uniform(\{1,...,T\})$
    \STATE $\bar\alpha_t=\prod_{i=1}^t(1-\beta_i)$
    \STATE $x_t=\sqrt{\bar\alpha_t}x_0+\sqrt{1-\bar\alpha_t}\epsilon_1, y_t=\sqrt{\bar\alpha_t}y_0+\sqrt{1-\bar\alpha_t}\epsilon_2$ 
    \hfill\COMMENT{Eq. (\ref{e10})-(\ref{e11})}
    \STATE $\epsilon_x=model_x(x_t,y_t,t)$  
    \hfill\COMMENT{$p_x(x_{t-1}|x_t,y_t)$}
    \STATE $l_x=(\epsilon_1-\epsilon_x).square().mean()$
    \STATE update $x$ with $\nabla_{x}l_x$
\UNTIL{converged}
\RETURN $model_x$
\end{algorithmic}
\end{algorithm}
\begin{algorithm}[h]
\caption{The sampling process of paired images.}
\label{alg2}
\renewcommand{\algorithmicrequire}{\textbf{Input:}}
\renewcommand{\algorithmicensure}{\textbf{Output:}}
\begin{algorithmic}[1]
\REQUIRE noise schedule $\beta_t$, number of timesteps $T$, trained $model_x$, trained $model_y$
\ENSURE generated images $\hat x_0,\hat y_0$
    \STATE sample $x_T\sim \mathcal{N}(0,I),y_T \sim \mathcal{N}(0,I)$
\FOR{t={T,T-1,...,1}}
    \STATE $\bar\alpha_t=\prod_{i=1}^t(1-\beta_i)$
    \STATE $\epsilon_x=model_x(x_t,y_t,t)$
    \hfill\COMMENT{$p_x(x_{t-1}|x_t,y_t)$}
    \STATE $\hat x_0=\frac{x_t-\sqrt{1-\bar\alpha_t}\epsilon_x}{\sqrt{\bar\alpha_t}},x_{t-1}=\sqrt{\bar\alpha_{t-1}}\hat x_0+\sqrt{1-\bar\alpha_{t-1}}\cdot\epsilon_x$
    \hfill\COMMENT{Eq. (\ref{e7})-(\ref{e8})}
    \STATE $\epsilon_y=model_y(\hat x_0,y_t,t)$
    \hfill\COMMENT{$p_y(y_{t-1}|\hat x_0,y_t)=p_y(y_{t-1}|\frac{x_{t-1}-\sqrt{1-\bar\alpha_{t-1}}\epsilon_x}{\sqrt{\bar\alpha_{t-1}}},y_t)$}
    \STATE $\hat y_0=\frac{y_t-\sqrt{1-\bar\alpha_t}\epsilon_y}{\sqrt{\bar\alpha_t}},y_{t-1}=\sqrt{\bar\alpha_{t-1}}\hat y_0+\sqrt{1-\bar\alpha_{t-1}}\cdot\epsilon_y$
    \hfill\COMMENT{Eq. (\ref{e7})-(\ref{e8})}
\ENDFOR
\RETURN $\hat x_0,\hat y_0$
\end{algorithmic}
\end{algorithm}

\section{Experiments}
We conducted a paired image generation experiment taking the mass lesion masks ($x_0$) and the DBT slices ($y_0$) as paired images. Then, we applied the generated paired images in the supervised training of the DBT mass lesion segmentation task to verify the effectiveness of the generated images.

\subsection{Datasets and Experimental Setup}
\subsubsection{DBTMassSeg.}
This private DBT mass segmentation dataset is sourced from Suzhou Municipal Hospital and Guizhou Provincial People's Hospital\footnotemark{}. The dataset contains data of 367 patients and the data of each patient includes images from the CC and MLO perspectives and the corresponding mass lesion masks, which are manually annotated by two experienced radiologists. There are a total of 8,723 slices containing mass lesions. In the preprocessing stage, we cropped the blank parts and resized the images to 512×512. 
\footnotetext{The data used in this study were approved by the Ethics Committee of Suzhou Municipal Hospital (No. 2024320) and Guizhou Provincial People’s Hospital (No. 2024328). Written informed consent was waived as the data were fully anonymized.}

\subsubsection{Experimental Setup.}
 We linearly scaled the images to $[-1, 1]$ and set the noise schedule $\beta_t$ to constants increasing linearly from $\beta_1=10^{-4}$ to $\beta_T=0.02$~\cite{ddpm}. The maximum time step $T$ was set to 1024. 
 We used U-Net~\cite{unet} as the network architecture of the two diffusion models. We set the number of the input channels to 2 and introduced the guiding signal through concatenation in the channel dimension~\cite{sgd,ddp}. During the sampling process, we adopted the uniform step sampling method of DDIM~\cite{ddim}, with the number of sampling steps set to 256.

\subsection{Results}

\subsubsection{Focused Generation of Lesion Regions.}The unconditional paired image generation process is shown in Fig. \ref{fig3}. It can be seen that during the generation process of the lesion masks $\hat x_0$ and the DBT slices $\hat y_0$, the shape of the lesion is generated based on the lesion masks at the early stage, and the overall texture is refined in the subsequent process. It indicates that the network is more focused on the generation of lesion regions according to the diffusion guidance.
\begin{figure}
\centering
\includegraphics[width=\textwidth]{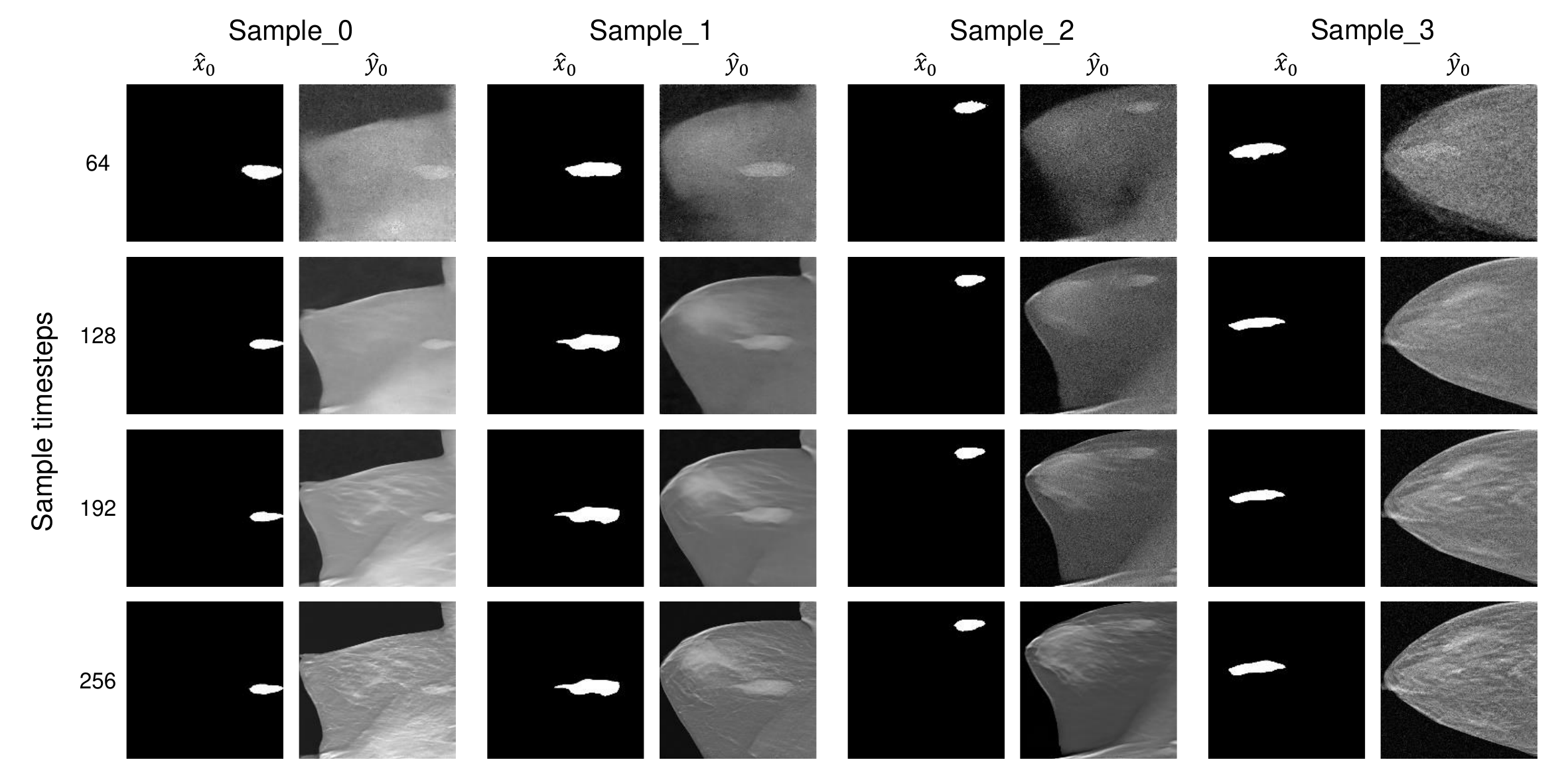}
\caption{The progressive generation process of PIG.} \label{fig3}
\end{figure}

\subsubsection{Comparative Experiments with Unconditional Diffusion Models.}
Since the proposed PIG method is a generation approach that does not require external conditions, we conducted comparative experiments with the unconditional diffusion models, including DDPM~\cite{ddpm} and DDIM~\cite{ddim}. Specifically, we set the number of sampling steps to 1024 for DDPM and 256 for DDIM, and employed each method to generate 2048 DBT slices respectively. We adopted the Frechet Inception Distance (FID)~\cite{fid} metric to evaluate the generation quality.

\begin{table}
    \centering 
    \caption{The FID metric of different methods}\label{tab1}
    \begin{tabular*}{0.6\textwidth}{@{\extracolsep{\fill}}cccc}
        \hline
        Method                  &DDPM~\cite{ddpm}   &DDIM~\cite{ddim}   &PIG\\
        \hline
        FID$\downarrow$         &31.84  &31.54  &\textbf{15.88}\\
        \hline
    \end{tabular*}
\end{table}
The experimental results presented in Table \ref{tab1} show that during the image generation process, PIG generates guiding signals simultaneously, which ensures that the process requires no additional input conditions and enhances the image generation quality.

\subsubsection{Comparative Experiments with Conditional Diffusion Models.} 
Conditional diffusion models can be guided by inputting lesion masks for image generation. Therefore, we evaluated the quality of the generated images through a lesion segmentation task. 

First, we conducted five-fold cross-validation on the DBTMassSeg dataset. Subsequently, we selected LDM~\cite{ldm}, SegGuidedDif~\cite{sgd} and ControlNet~\cite{controlnet} as comparative methods and used the existing masks in the training set as guidance to generate 2048 DBT slices respectively. Meanwhile, our proposed PIG method generated 1024, 2048 and 3072 paired images successively. Then we added the images generated by different methods to the supervised training process respectively. We used the Dice, IoU, Precision, and Recall metrics to evaluate the performance of the segmentation task.

\begin{table}
    \centering 
    \caption{The experimental results of the mass segmentation task.}\label{tab2}
    \begin{tabular*}{\textwidth}{@{\extracolsep{\fill}}ccccc}
        \hline
        Dataset&Dice(\%)$\uparrow$&IoU(\%)$\uparrow$&Precision(\%)$\uparrow$&Recall(\%)$\uparrow$\\
        \hline
        DBTMassSeg              &52.36  &41.60  &56.14  &56.81\\
        +LDM\_2048~\cite{ldm}   &56.91  &45.88  &60.92  &60.99\\
        +SegGuidedDif\_2048~\cite{sgd}      &58.30  &46.99  &60.40              &64.76\\
        +ControlNet\_2048~\cite{controlnet} &57.38  &46.09  &\underline{61.53}  &61.78\\
        +PIG\_1024          &56.52              &45.39              &59.86          &61.70\\
        +PIG\_2048          &\textbf{60.54}     &\textbf{49.10}     &\textbf{62.92} &\textbf{66.01}\\
        +PIG\_3072          &\underline{59.14}  &\underline{47.92}  &60.65          &\underline{65.66}\\
        
        \hline
    \end{tabular*}
\end{table}

The average results of the five-fold cross-validation experiment are presented in Table \ref{tab2}. Here, "+PIG\_2048" denotes the addition of 2048 annotated images generated by the PIG method to the original DBTMassSeg dataset and the notations of the other datasets follow the same principle. As shown in the results, since PIG can generate more diverse lesion masks, when generating the same number of DBT slices, the images generated by PIG can help the network learn the features of lesion areas better, thereby enhancing the network's generalization and performance. Additionally, in the experiment, all the generation methods exhibit similar trends. Specifically, the performance peaks after generating about 2048 images, and further increasing the proportion of generated images leads to performance fluctuations. For PIG, the performance dips to its lowest point after generating about 3072 images, which outperforms the highest values achieved by other methods. This indicates the high quality and stability of PIG.

\section{Conclusion}
We propose an innovative unconditional paired image generation method. First, through mathematical derivations, we prove that the generation process of paired images can be achieved by two diffusion processes guided mutually. Therefore, by training an additional diffusion guider for the conditional diffusion model, we can introduce the guiding signal into the diffusion process without external conditions. The experimental results show that our generation method can improve the generation quality and effectively alleviate the shortage of annotated data. 
\begin{credits}
\subsubsection{\ackname}
This study was partly supported by National Natural Science Foundation of China (No. 62371449 and No. 82302286), Jiangsu Provincial Key Research and Development Program Social Development Project (No. BE2022720), Guizhou Provincial Basic Research Program (Natural Science) (zk[2025]-513), and Guizhou Provincial Health Commission Science Technology Fund Project (gzwkj2024-474).
\subsubsection{\discintname}
The authors have no competing interests to declare that are relevant to the content of this article.
\end{credits}

%
%
%
%
\bibliographystyle{splncs04}
\bibliography{Paired_Image_Generation_with_Diffusion_Guided_Diffusion_Models.bbl}

\end{document}